\acrodef{IoT}{Internet of Things}
\acrodef{BS}{base station}
\acrodef{pdf}{probability density function}
\acrodef{i.i.d.}{independent and identically distributed}
\acrodef{CDF}{cumulative distribution function}
\acrodef{FL}{federated learning}
\acrodef{ML}{machine learning}
\acrodef{SGD}{stochastic gradient descent}
\acrodef{MAC}{multiply-accumulate}
\acrodef{CNN}{convolutional neural network}
\acrodef{DNN}{deep neural network}
\acrodef{QNN}{quantized neural network}
\acrodef{OFDMA}{orthogonal frequency domain multiple access}
\newtheorem{theorem}{Theorem}
\newtheorem{lemma}{Lemma}
\newtheorem{prop}{Proposition}
\newcommand{\E}{\mathbb{E}}
\renewcommand{\P}{\mathbb{P}}
\newcommand{\Whole}{N}
\newcommand{\weights}{\boldsymbol{w}}
\newcommand{\qweights}{\boldsymbol{w}^{Q, k}}
\newcommand{\inputvec}{\boldsymbol{x}_{kl}}
\newcommand{\datasize}{D_k}
\newcommand{\loss}{F_k}
\newcommand{\globall}{F}
\newcommand{\globalit}{t}
\newcommand{\scheduleset}{\mathcal{N}_t}
\newcommand{\schedulesize}{K}
\newcommand{\SGDrun}{I}
\newcommand{\precision}{n}
\newcommand{\smallest}{\kappa}
\newcommand{\Quantize}{Q}
\newcommand{\ka}{\nonumber \\}
\newcommand{\learningrate}{\eta}
\newcommand{\minibatch}{\xi}
\newcommand{\modelupdate}{\boldsymbol{d}}
\newcommand{\qmodelupdate}{\boldsymbol{d}^{Q,k}}
\newcommand{\smooth}{L}
\newcommand{\strong}{\mu}
\newcommand{\sgdvariance}{\sigma}
\newcommand{\sgdbound}{G}
\newcommand{\constant}{v}
\newcommand{\MACunits}{p}
\newcommand{\MACE}{E_\text{MAC}}
\newcommand{\localb}{E_\text{l}}
\newcommand{\mainb}{E_\text{m}}
\newcommand{\MACamplitude}{A}
\newcommand{\premax}{n_{\text{max}}}
\newcommand{\MACex}{\alpha}
\newcommand{\localE}{E^{C, k} (n)}
\newcommand{\tE}{E^{UL, k} (n)}
\newcommand{\numweights}{\text{$N_s$}}
\newcommand{\numMAC}{\text{$N_c$}}
\newcommand{\numout}{\text{$O_s$}}
\newcommand{\achievalble}{r}
\newcommand{\accuracy}{\epsilon}
\newcommand{\obj}{f_E (\precision)}
\begin{document}
	\newcommand{\paperTitle}{Title}
\vspace{-0.0cm}


\title{\vspace{-0.0cm} On the Tradeoff between Energy, Precision, and Accuracy in Federated Quantized Neural Networks}

\author{ \vspace{-0.05 cm}	
	
\IEEEauthorblockN{
	Minsu~Kim$^1$,  Walid~Saad$^1$, Mohammad~Mozaffari$^2$, and Merouane~Debbah$^{3, 4}$
}\\[-0.5em]
\vspace{-2.0mm}
\IEEEauthorblockA{
$^1$ Wireless@VT, Bradley Department of Electrical and Computer Engineering, Virginia Tech, Blacksburg, VA, USA. 
\\$^2$ Ericsson Research, Santa Clara, CA, USA.
	\\ $^3$ Technology Innovation Institute, Abu Dhabi, United Arab Emirates.
	\\ $4$ Mohamed Bin Zayed University of Artificial Intelligence, Abu Dhabi, United Arab Emirates.
	\\Emails: \{msukim, walids\}@vt.edu, mohammad.mozaffari@ericsson.com,  merouane.debbah@tii.ae.
	\vspace{-0.8 cm}	
}
\thanks{This work was supported by the U.S. National Science Foundation under Grants  CNS-1814477 and CNS-2114267.
\vspace{-1.0mm}
}
}

\maketitle 

%

%

\acresetall
\begin{abstract}
Deploying federated learning (FL) over wireless networks with resource-constrained devices requires balancing between accuracy, energy efficiency, and precision. Prior art on FL often requires devices to train deep neural networks (DNNs) using a 32-bit precision level for data representation to improve accuracy. However, such algorithms are impractical for resource-constrained devices since DNNs could require execution of millions of operations. Thus, training DNNs with a high precision level incurs a high energy cost for FL. In this paper, a quantized FL framework, that represents data with a finite level of precision in both local training and uplink transmission, is proposed. Here, the finite level of precision is captured through the use of quantized neural networks (QNNs) that quantize weights and activations in fixed-precision format. In the considered FL model, each device trains its QNN and transmits a quantized training result to the base station. Energy models for the local training and the transmission with the quantization are rigorously derived. An energy minimization problem is formulated with respect to the level of precision while ensuring convergence. To solve the problem, we first analytically derive the FL convergence rate and use a line search method. Simulation results show that our FL framework can reduce energy consumption by up to 53\% compared to a standard FL model. The results also shed light on the tradeoff between precision, energy, and accuracy in FL over wireless networks.  
\end{abstract}

\section{Introduction}
The emergence of \ac{FL} ushered in a new era of distributed inference that can alleviate  data privacy concerns \cite{Ti:20}. In \ac{FL}, massively distributed mobile devices and a central server (e.g., a \ac{BS}) collaboratively train a shared model without requiring devices to share raw data. Many \ac{FL} algorithms employ complex \acp{DNN} to achieve a high accuracy by allocating many bits for the precision level in data representation \cite{HB:16}. DNN structures, such as \acp{CNN}, can have tens of millions of parameters and billions of \ac{MAC} operations \cite{Ro:19}. In practice, the energy consumed for computation and memory access is proportional to the level of precision \cite{MB:17}. Hence, computationally intensive neural networks with a conventional 32 bits full precision level may not be suitable for deployment on energy-constrained mobile and \ac{IoT} devices. In addition, a \ac{DNN} may increase the energy consumption of transmitting a training result due to the large model size. To design an energy-efficient \ac{FL} scheme, one could reduce the level of precision to decrease the energy consumption for the computation and transmission. However, the reduced precision level could introduce quantization error that degrades the accuracy and the convergence rate  of \ac{FL}. Therefore, deploying real-world FL frameworks over wireless systems requires one to \emph{balance precision, accuracy, and energy efficiency} -- a major challenge facing future distributed learning frameworks.

Remarkably, despite the surge in research on the use of \ac{FL}, only a handful of works in \cite{Stesa:21, Ngutra:19, Zhaya:21, Khaung:21, SZ:21, Yuyan:21} have studied the energy efficiency of \ac{FL} from a system-level perspective. A novel analytical framework that derived energy efficiency of \ac{FL} algorithms in terms of the carbon footprint was proposed in \cite{Stesa:21}. Meanwhile, in \cite{Ngutra:19}, the authors formulated an energy minimization problem under heterogeneous power constraints of mobile devices. The work in \cite{Zhaya:21} investigated a resource allocation problem to minimize the total energy consumption considering the convergence rate. In \cite{Khaung:21}, the energy consumption of \ac{FL} was minimized by controlling workloads of each device, which has heterogeneous computing resources. The work in \cite{SZ:21} proposed a quantization scheme for both uplink and downlink transmission in \ac{FL} and analyzed the impact of the quantization on the convergence rate. The authors in \cite{Yuyan:21} considered a novel \ac{FL} setting, in which each device trains a binary neural network so as to improve the energy efficiency of transmission by uploading the binary parameters to the server.

However, the works in \cite{Stesa:21, Ngutra:19, Zhaya:21, Khaung:21, SZ:21} did not consider the energy efficiency of their \ac{DNN} structure during training. Since devices have limited computing and memory resources, deploying an energy-efficient \ac{DNN} will be a more appropriate way to reduce the energy consumption of \ac{FL}. 
Although the work in [11] considered binarized neural networks during training, this work did not optimize the quantization levels of the neural network to balance the tradeoff between precision and energy. To the best of our knowledge, there is no work that jointly considers the tradeoff between precision, energy, and accuracy.

The main contribution of this paper is a novel energy-efficient quantized \ac{FL} framework that can represent data with a finite level of precision in both local training and uplink transmission. In our \ac{FL} model, each device trains a \ac{QNN}, whose weights and activations are quantized with a finite level of precision, so as to decrease energy consumption for computation and memory access. After training, each device quantizes the result with the same level of precision used in the local training and transmits it to the \ac{BS}. The \ac{BS} aggregates the received information to generate a new global model and broadcasts it back to the devices. To quantify the energy consumption, we propose rigorous energy model for the local training based on the physical structure of a processing chip. We also derive the energy model for the uplink transmission considering the quantization. To achieve a high accuracy, \ac{FL} requires a high level of precision at the cost of increased total energy consumption. Meanwhile, although, a low level of precision can decrease the energy consumption per iteration, it will decrease the convergence rate to achieve a target accuracy. Thus, there is a need for a new approach to analyze and optimize the tradeoff between precision, energy, and accuracy. To this end, we formulate an optimization problem by controlling the level of precision to minimize the total energy consumption while ensuring convergence with a target accuracy. To solve the problem, we first analytically derive the convergence rate of our \ac{FL} framework and use a line search method to numerically find the local optimal solution. Simulation results show that our \ac{FL} model can reduce the energy consumption up to 53\% compared to a standard \ac{FL} model, which uses 32-bit full-precision for data representation. The results also shed light on the tradeoff between precision, energy efficiency, and accuracy in \ac{FL} over wireless networks.

The rest of this paper is organized as follows. Section \ref{sec:system model} presents the system model. In Section \ref{sec:problem formulation}, we describe the studied problem. Section \ref{sec:simlulation results} provides simulation results. Finally, conclusions are drawn in Section \ref{sec:conclusion}.

\begin{figure}
			\includegraphics[width=0.9\columnwidth]{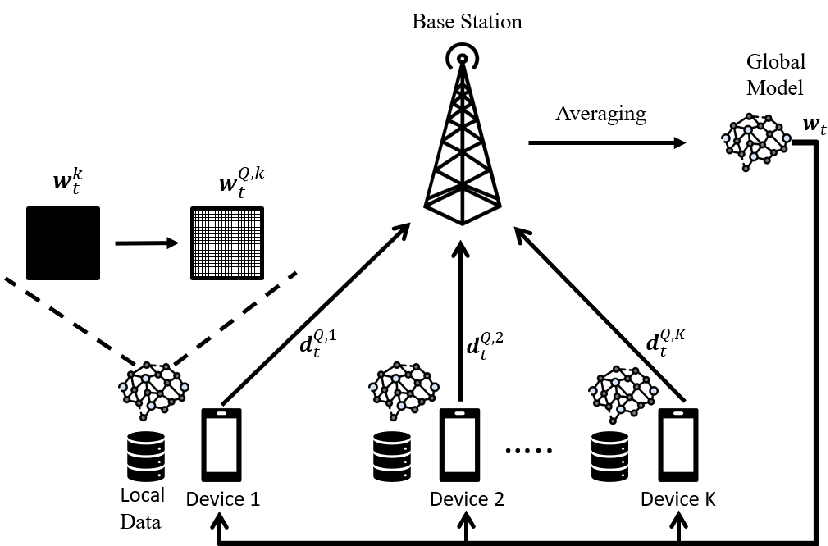}
 			\captionsetup {singlelinecheck = false}

		\caption{An illustration of the quantized \ac{FL} model over wireless network.}
		\label{fig:system_model}

\vspace{-0.6 cm}	
\end{figure}

\section{System Model} \label{sec:system model}
We consider an \ac{FL} system, in which $\Whole$ devices (e.g. edge or mobile devices) are connected to one \ac{BS}. As shown in Fig. \ref{fig:system_model}, the \ac{BS} and devices collaboratively perform an \ac{FL} algorithm for executing a certain data analysis task. Each device $k$ has local dataset $\mathcal{D}_k = \{\inputvec, y_{kl}\}$, where  $\l = 1, \dots, \datasize$. In particular, $\{\inputvec, y_{kl}\}$ is an input-output pair for image classification, where $\boldsymbol{x}_{kl}$ is an input vector and $y_{kl}$ is the corresponding output. We define a loss function $f(\weights,\inputvec, y_{kl} )$ to quantify the performance of a \ac{ML} model with parameters $\weights \in \mathbb{R}^d$ over $\{\inputvec, y_{kl}\}$. Since device $k$ has $\datasize$ data samples, its local loss function is given by 
\begin{align}
 	\loss(\weights) = \frac{1}{\datasize} \sum_{l=1}^{\datasize}  f(\weights,\inputvec, y_{kl} ).
\end{align}
We define the global loss function over $\Whole$ devices as follows:
\begin{align}
	\globall(\weights) = \sum_{k=1}^{\Whole} \frac{\datasize}{D} \loss (\weights) = \frac{1}{D} \sum_{k=1}^{N} \sum_{l=1}^{\datasize} f(\weights,\inputvec, y_{kl} ), 
\end{align}
where $D = \sum_{k=1}^{\Whole} \datasize$ is the total size of the entire dataset. The \ac{FL} process aims to find the optimal model parameters $\weights$ that can minimize the global loss function as follows 
\begin{align}
	\min_{\weights} \globall (\weights). \label{FL problem}
\end{align}

Solving problem \eqref{FL problem} typically requires an iterative process between the \ac{BS} and devices. However, in practical systems, such as an \ac{IoT}, the devices are energy-constrained. They are unable to run a power consuming \ac{FL} process. Hence, we propose to manage the level of precision of our \ac{FL} to reduce the energy consumption for computation, memory access, and transmission. As such, we adopt a \ac{QNN} structure whose weights and activations are quantized in fixed-point format rather than conventional 32-bit floating-point format \cite{IH:16}. 
\subsection{Quantized Neural Networks}
In our model, each device trains a \ac{QNN} of identical structure using $\precision$ bits of precision for quantization. We can express data more precisely if we increase $\precision$ at the cost of more energy usage. We can represent any given number in fixed-point format such as $[\Omega. \omega]$, where $\Omega$ is the integer part and $\omega$ is the fractional part of the given number \cite{SG:15}. Here, we use one bit to represent the integer part and $(\precision-1)$ bits for the fractional part. Then, the smallest positive number we can present would be $\kappa = 2^{-n+1}$, and the possible range of numbers with $\precision$ bits will be $[-1 ,1-2^{-n+1}]$. Note that a \ac{QNN} restricts the value of weights to [-1, 1]. We consider a stochastic quantization scheme \cite{SG:15}, where any given $w \in \weights$ is quantized as follows:
\begin{align}
	\Quantize(w)  = 
	\begin{cases}
		\lfloor w \rfloor, & \quad  \text{with probability} \quad \frac{\lfloor x \rfloor +\smallest - w}{\smallest},\\
		\lfloor w \rfloor + \smallest,              & \quad \text{with probability} \quad \frac{w - \lfloor w \rfloor}{\smallest},
	\end{cases}
\end{align}    	
where $\lfloor w \rfloor$ is the largest integer multiple of $\smallest$ less than or equal to $w$.

We denote the quantized weights of layer $l$ as $\qweights_{(l)} = Q(\weights^k_{(l)})$ for device $k$. Then, the outputs of layer $l$ will be: 
\begin{align}
	o_{(l)} = g_{(l)} (\qweights_{(l)}, o^{\Quantize}_{(l-1)}),
\end{align}
where $g(\cdot)$ is the operation of layer $l$ on the input, such as activation and batch normalization, and  $o^{\Quantize}_{(l-1)}$ is the quantized output from the previous layer $l-1$. Note that the output $o_{(l)}$ will be quantized and fed into the next layer as an input. For training, we use \ac{SGD} algorithm as follows
\begin{align}
	\weights^k \leftarrow \weights^k - \learningrate \nabla \loss(\qweights, \minibatch^k), \label{SGD}
\end{align}
where $\learningrate$ is the learning rate and $\minibatch$ is a mini-batch for the current update. Then, we restrict the values of $\weights^k$ to $[-1, 1]$ as $\weights^k \leftarrow \text{clip}(\weights^k, -1, 1),$
where $\text{clip}(\cdot, -1, 1)$ projects each input to 1 (-1) for any input larger (smaller) than 1 (-1), or returns the same value as the input. Otherwise, $\weights^k$ can become very large without a meaningful impact on quantization \cite{IH:16}. After each training, $\weights^k$ are quantized as $\qweights$.  \vspace{-1.0mm}

\subsection{\ac{FL} model}

For learning, without loss of generality, we adopt FedAvg \cite{HB:16} to solve problem \eqref{FL problem}. At each global iteration $\globalit$, the \ac{BS} randomly selects a set of devices $\scheduleset$ with $|\scheduleset| = K$ and broadcasts the current global model $\weights_t$ to the scheduled devices. Each device in $\scheduleset$ trains its local model based on the received global model by running $\SGDrun$ steps of \ac{SGD} on its local loss function as below
\begin{align}
	\weights_{t, \tau}^k \hspace{-0.5mm}= \hspace{-0.5mm} \weights_{t, \tau-1}^k  \hspace{-0.5mm}- \hspace{-0.5mm} \learningrate_t \nabla \loss(\qweights_{t, \tau-1}, \minibatch^k_\tau), \forall \tau \hspace{-0.5mm} = \hspace{-0.5mm} 1, \dots , \SGDrun , \label{SGD_update}
\end{align}	
where $\learningrate_t$ is the learning rate at global iteration $t$.
Note that unscheduled devices do not perform local training. Then, $\schedulesize$ devices calculates the model update $\modelupdate^k_{t+1} = \weights^k_{t+1} - \weights^k_t$, where $\weights^k_{t+1} = \weights^k_{t, \SGDrun}$ and $\weights^k_t = \weights^k_{t, 0}$ \cite{SZ:21}. Typically, $\modelupdate^k_{t+1}$ has a millions of elements. It is not practical to send $\modelupdate^k_{t+1}$ with full precision for energy-constrained devices. Hence, we apply the same quantization scheme used in \acp{QNN} to $\modelupdate^k_{t+1}$ and denote its quantization result as $\qmodelupdate_{t+1}$. Then, $\schedulesize$ devices transmit their model update to the \ac{BS}. The received model updates are averaged by the \ac{BS}, and the next global model will be generated as below
\begin{align}
	\weights_{t+1} = \weights_t + \frac{1}{K}\sum_{k \in \scheduleset } \qmodelupdate_{t+1}.
\end{align}
The \ac{FL} system repeats this process until the global loss function converges to a target accuracy constraint $\accuracy$. We summarize the aforementioned algorithm in Algorithm 1.

Next, we propose the energy model for the computation and the transmission for our \ac{FL} system.

\begin{figure}[t!]
	\includegraphics[width=0.9\columnwidth]{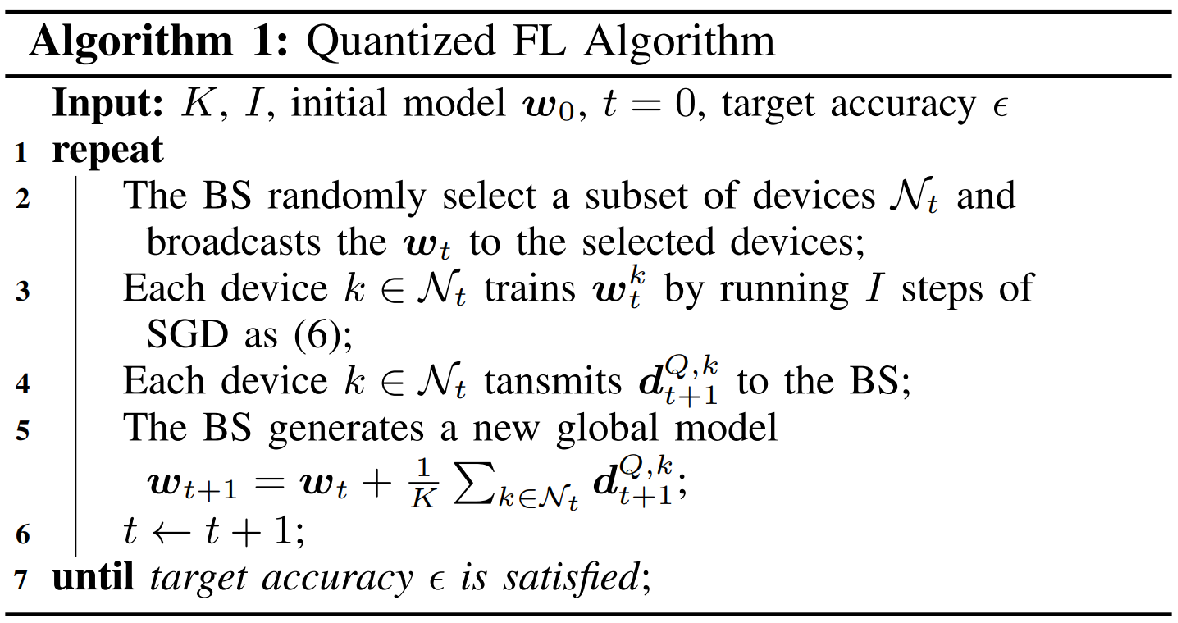}
	\captionsetup {singlelinecheck = false}
	
	\vspace{-5.0mm}	
\end{figure}
\subsection{Computing and Transmission model}
\subsubsection{Computing model}
	\begin{figure}
	\includegraphics[width=0.9\columnwidth]{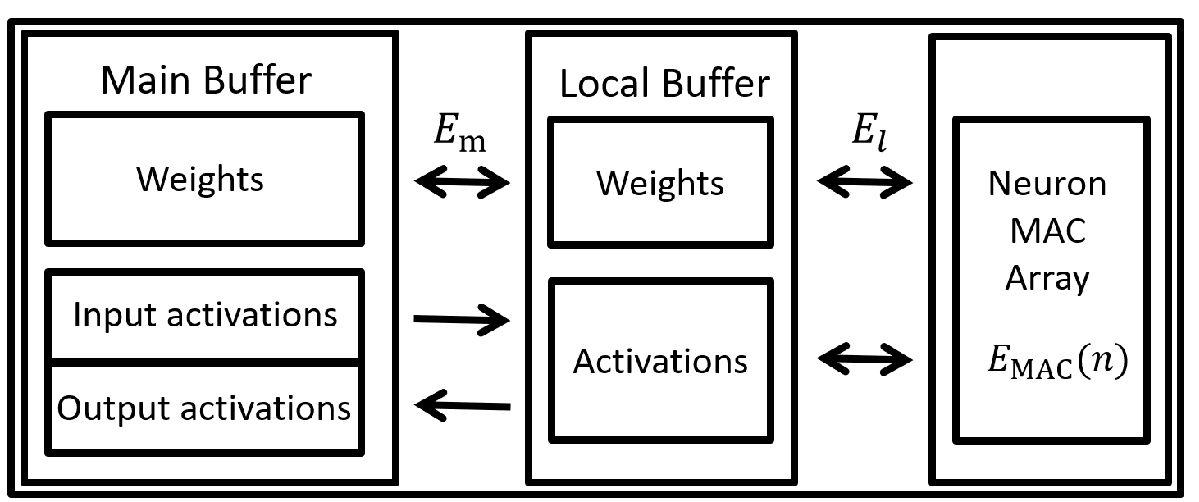}
	\captionsetup {singlelinecheck = false}

	\caption{An illustration of the two dimensional processing chip.}
	\label{fig:comptuing_model}
	
	\vspace{-6.0mm}
	
	\end{figure}
We consider a typical two dimensional processing chip for \acp{CNN} as shown in Fig. 2 \cite{MB:17}. This chip has a parallel neuron array, $\MACunits$ \ac{MAC} units, and two levels of memory: a main and a local buffer. A main buffer stores the current layers' weights and activations, while a local buffer caches currently used weights and activations. From \cite{MB:18},  we use the energy model of a \ac{MAC} operation for $\precision$ levels of precision $\MACE (\precision) = \MACamplitude \left( \precision /\premax \right) ^{\MACex}$, where $\MACamplitude > 0$, $1 < \MACex < 2$, and $\premax$ is the maximum precision level. Here, a \ac{MAC} operation includes the operation of a layer such as output calculation, batch normalization, activation, and weight update. Then, the energy consumption for accessing a local buffer $\localb$ can be modeled as $\MACE (\precision)$, and the energy for accessing a main buffer is $\mainb = 2 \MACE (\precision)$  \cite{MB:17}	.

The energy consumption of device $k$ for one iteration of local training is given by $\localE$ when $\precision$ bits are used for the precision level in the quantization. Then, $\localE$ is the sum of the computing energy $E_\text{C}(\precision)$, the access energy for fetching weights from the buffers  $E_\text{W}(\precision)$, and the access energy for fetching activations from the buffers $E_\text{A}(\precision)$, as follows \cite{MB:18}:
\begin{align}
	&\localE = E_\text{C}(\precision) + E_\text{W}(\precision) + E_\text{A}(\precision), \ka
	&E_{\text{C}}(\precision) = \MACE (\precision) \numMAC + 4\numout \ \MACE(\premax), \ka
	&E_\text{W}(\precision) = \mainb \numweights + \localb \numMAC \sqrt{{\precision}/{\MACunits \premax}}, \ka
	&E_\text{A}(\precision) = 2 \mainb \numout +  \localb \numMAC \sqrt{{\precision}/{\MACunits \premax}}, \label{computation_energy}
\end{align} 
where $\numMAC$ is the number of \ac{MAC} operations, $\numweights$ is the number of weights, and $\numout$ is the number of intermediate outputs throughout the network. For $E_{\text{C}}$, in a \ac{QNN}, batch normalization, activation function, gradient calculation, and weight update are done in full-precision $\premax$ to each output $\numout$ \cite{IH:16}. Once we fetch weights from a main to a local buffer, they can be reused in the local buffer afterward as shown in $E_\text{W}(\precision)$. In Fig. \ref{fig:comptuing_model}, a \ac{MAC} unit fetches weights from a local buffer for computation. Since we are using a two dimensional \ac{MAC} array of $\MACunits$ \ac{MAC} units, they can share fetched weights with the same row and column, which has $\sqrt{\MACunits}$ \ac{MAC} units respectively. In addition, a \ac{MAC} unit can fetch more weights due to the quantization with $\precision$ bits compared with when weights are represented in $\premax$ bits. Thus, we can reduce the access to a local buffer by the amount of $\sqrt{{\precision}/{\MACunits \premax}}$. A similar process applies to $E_\text{A}$ since activations are fetched (stored) from (to) the main buffer. 

\subsubsection{Transmission Model}
We use \ac{OFDMA} to transmit a model update to the \ac{BS}. The achievable rate of device $k$ is given by
\begin{align}
	\achievalble_k = B \log_2\left( 1 +\frac{P h_k}{N_0 B}  \right),
\end{align}
where $B$ is the allocated bandwidth, $h_k$ is the channel gain between device $k$ and the \ac{BS}, $P$ is the transmit power of each device, and $N_0$ is the power spectral density of white noise. After local training, device $k$ will transmit $\qmodelupdate_t$ to the \ac{BS} at given global iteration $t$. Then, the transmission time $T_k$ for uploading $\qmodelupdate_t$ is given by
\begin{align}
\displaystyle T_k (\precision) = \frac{||\qmodelupdate_t||}{\achievalble_k} = \frac{||\modelupdate^k_t|| \precision}{\achievalble_k \premax}.
\end{align}
Note that $\qmodelupdate_t$ is quantized with $\precision$ bits of precision while $\modelupdate^k_t$ is represented with $\premax$ bits. Then, the energy consumption for the uplink transmission is given by
\begin{align}
	\tE = T_k(\precision) \times P = \frac{P ||\modelupdate^k_t|| \precision} {B \log_2\left( 1 +\frac{P h_k}{N_0 B}  \right) \premax} . \label{transmission_energy}
\end{align}

In the following section, we formulate an energy minimizing problem based on the derived energy models.

\section{Proposed Approach for Energy-Efficient Federated \ac{QNN}} \label{sec:problem formulation}
We formulate an energy minimization problem while ensuring convergence under a target accuracy. A tradeoff exists between the energy consumption and the convergence rate with respect to $\precision$. Hence, finding the optimal $n$ is important to balance the tradeoff and achieve the target accuracy. We propose a numerical method to solve this problem.

We aim to minimize the expected total energy consumption until convergence under the target accuracy as follows:
\begin{subequations}
\begin{align}
		\min_{\precision} \quad & \E \left[ \sum_{t=1}^{T} \sum_{k\in\scheduleset} \tE + \SGDrun \localE \right] \label{obj} \\
		\textrm{s.t.} \quad & \precision \in [1, \dots, \premax],\\
		& \E[\globall(\weights_T)] - \globall(\weights^*) \leq \accuracy \label{accuracy_constraint},  
\end{align}
\end{subequations}
where $\SGDrun$ is the number of local iterations, $\E[\globall(\weights_T)]$ is the expectation of global loss function after $T$ global iteration, $\globall(\weights^*)$ is the minimum value of $\globall$, and $\accuracy$ is the target accuracy.  

Since $\schedulesize$ devices are randomly selected at each global iteration, we can derive the expectation of the objective function of \eqref{obj} as follows
\begin{align}
\obj &= \E \left[ \sum_{t=1}^{T} \sum_{k\in\scheduleset} \tE + \SGDrun \localE \right] \ka
&= \frac{\schedulesize T}{N} \sum_{k=1}^{N} \left\{ \tE + \SGDrun \localE \right\}.
\end{align}
To represent $T$ with respect to $\accuracy$, we assume that the loss function is $\smooth$-smooth, $\strong$-strongly convex and that the variance and the squared norm of the stochastic gradient are bounded by $\sgdvariance_k^2$ and $\sgdbound$ for device $k$, $\forall k \in \Whole$, respectively. Before we present the expression of $T$, in the following lemma, we will first analyze the quantization error of the stochastic quantization in Sec. \ref{sec:system model}.
\begin{lemma}
	For the stochastic quantization $\Quantize(\cdot)$, a scalar value $w$, and a vector $\weights \in \mathbb{R}^d$, we have \label{Lem1}
	\begin{align}
		&\E[\Quantize(w)] = w, \quad \E[(\Quantize(w) - w)^2] \leq \frac{1}{2^{2\precision}}, \\
		&\E[\Quantize(\weights)] = \weights, \quad \E[||\Quantize(\weights) - \weights||^2] \leq \frac{d}{2^{2\precision}}.
	\end{align}
\end{lemma}
\begin{proof}
	We first derive $\E[\Quantize(w)]$ as
	\begin{align}
		\E[\Quantize(w)] &= \lfloor w \rfloor \frac{\lfloor w \rfloor \hspace{-0.5mm} + \hspace{-0.5mm} \smallest \hspace{-0.5mm} - \hspace{-0.5mm} w}{\smallest} \hspace{-0.5mm} + \hspace{-0.5mm} (\lfloor w \rfloor \hspace{-0.5mm} +  \hspace{-0.5mm} \smallest) \frac{w \hspace{-0.5mm} - \hspace{-0.5mm} \lfloor w \rfloor}{\smallest} \hspace{-0.5mm}  = \hspace{-0.5mm} w . \label{Lem1-1}
	\end{align} 
	Similarly,  $\E[(\Quantize(w)-w)^2]$ can be obtained as
	\begin{align}
		\E \hspace{-0.3mm} [ \hspace{-0.3mm} (\Quantize( \hspace{-0.5mm} w \hspace{-0.5mm}) \hspace{-0.7mm} - \hspace{-0.7mm}w)^2 \hspace{-0.2mm}] & \hspace{-0.5mm}=  \hspace{-0.5mm} ( \hspace{-0.3mm} \lfloor \hspace{-0.5mm} w \hspace{-0.5mm} \rfloor \hspace{-0.7mm} -  \hspace{-0.7mm} w)^2 \frac{\lfloor \hspace{-0.5mm} w \hspace{-0.5mm} \rfloor  \hspace{-0.7mm} +  \hspace{-0.7mm} \smallest \hspace{-0.5mm}  - \hspace{-0.5mm}	 w}{\smallest} \hspace{-0.5mm} + \hspace{-0.5mm} (\lfloor\hspace{-0.5mm} w \hspace{-0.5mm} \rfloor \hspace{-0.5mm} + \hspace{-0.5mm} \smallest \hspace{-0.5mm} - \hspace{-0.5mm}w)^2 \frac{w \hspace{-0.5mm} - \hspace{-0.5mm} \lfloor \hspace{-0.5mm} w \hspace{-0.5mm} \rfloor}{\smallest} \ka
		&= (w - \lfloor \hspace{-0.5mm} w \hspace{-0.5mm} \rfloor) (\lfloor w \rfloor+\smallest -w) \ka
		&\leq \frac{\smallest^2}{4} = \frac{1}{2^{2\precision}},  \label{AMGM}
	\end{align}
	where \eqref{AMGM} follows from the arithmetic mean and geometric mean inequality. Since expectation is a linear operator, we have $\E[\Quantize(\weights)] = \weights$ from \eqref{Lem1-1}. From the definition of the square norm, $\E[||\Quantize(\weights) - \weights||^2]$ can obtained as
	\vspace{-1.5mm}
	\begin{align}
		\E[||\Quantize(\weights) - \weights||^2] = \sum_{j=1}^{d} \E[(\Quantize(w_j) - w_j)^2] \leq \frac{d}{2^{2\precision}}.
	\end{align}
\vspace{-3.0mm}
\end{proof}
From Lemma \ref{Lem1}, we can see that our quantization scheme is unbiased as its expectation is zero. However, the quantization error can still increase for a large model. 
We next leverage the results of Lemma \ref{Lem1}, \cite{SZ:21},  and\cite{Xi:20} so as to derive $T$ with respect to $\accuracy$ in the following proposition. 
\begin{prop}
\vspace{-1.0mm} For learning rate $\learningrate_t = \frac{\beta}{t+\gamma}, \smooth < \frac{2\learningrate_t}{2\learningrate_t^2+1}, \beta > \frac{1}{\strong}, \ \text{and} \ \gamma > 0$, we have 
\begin{align}
	\E[\globall(\weights_T) - \globall(\weights^*)] \leq \frac{\smooth}{2}\frac{\constant}{T+\gamma}, \label{T_accuracy}
\end{align}

where $\constant$ is 
\vspace{-0.5mm}
\begin{align}
	\constant \hspace{-0.5mm} &=  \hspace{-0.5mm} \sum_{k=1}^{\Whole} \hspace{-0.5mm} \frac{\sgdvariance_k ^2}{\Whole^2} \hspace{-0.7mm} + \hspace{-0.7mm} \frac{d}{2^{2\precision}} \hspace{-1.0mm} \left( \hspace{-0.5mm} 1 \hspace{-0.5mm} + \hspace{-0.5mm} \frac{2 \SGDrun \sgdbound^2}{\schedulesize} \hspace{-0.5mm} \right) \hspace{-0.7mm} + \hspace{-0.7mm} 4(\SGDrun \hspace{-0.7mm}- \hspace{-0.7mm}1)^2 \sgdbound^2  \hspace{-0.7mm}+\hspace{-0.7mm} \frac{4(\Whole \hspace{-0.7mm} - \hspace{-0.7mm}\schedulesize)}{\schedulesize (\Whole \hspace{-0.7mm} - \hspace{-0.7mm} 1)} \SGDrun^2 \sgdbound^2. \label{v}
\end{align} \label{Prop}
\end{prop}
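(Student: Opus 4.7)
The plan is to establish \eqref{T_accuracy} via the standard descent-plus-induction strategy, adapted to accommodate three independent sources of stochastic perturbation: SGD minibatch noise, quantization error of both the QNN weights used during local training and the uploaded model update characterized in Lemma~\ref{Lem1}, and the random subsampling of $\schedulesize$ out of $\Whole$ devices per round. First I would invoke $\smooth$-smoothness of $\globall$ to reduce the claim to a bound on the iterate distance, since $\E[\globall(\weights_T) - \globall(\weights^*)] \leq (\smooth/2)\,\E\|\weights_T - \weights^*\|^2$; it then suffices to prove $\E\|\weights_T-\weights^*\|^2 \leq \constant/(T+\gamma)$ by induction on $T$ using the prescribed schedule $\learningrate_t = \beta/(t+\gamma)$.

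For the inductive step, I would unfold the global update $\weights_{t+1} = \weights_t + (1/\schedulesize)\sum_{k\in\scheduleset}\qmodelupdate_{t+1}$ into an ideal full-gradient step on $\globall$ plus four perturbation terms: (i) the error from quantizing each transmitted model update with $\precision$ bits; (ii) the error from quantizing the QNN weights fed into each of the $\SGDrun$ inner SGD steps; (iii) the client-drift bias caused by taking $\SGDrun$ local steps before averaging instead of a single averaged gradient step; and (iv) the variance from sampling $\schedulesize$ devices uniformly from $\Whole$. By Lemma~\ref{Lem1} both quantization perturbations are conditionally mean-zero, so their cross terms with the ideal descent direction vanish in expectation and the squared norm decomposes additively. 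Bounding the four contributions using Lemma~\ref{Lem1} for the quantization, the assumed bound $\E\|\nabla\loss(\qweights,\minibatch^k)\|^2 \leq \sgdbound^2$ on the stochastic gradients, and the classical without-replacement variance identity for the sampling, should reproduce the four summands in \eqref{v}: $\sum_k \sgdvariance_k^2/\Whole^2$ from the minibatch noise, $(d/2^{2\precision})(1 + 2\SGDrun\sgdbound^2/\schedulesize)$ from the two quantization sources combined, $4(\SGDrun-1)^2\sgdbound^2$ from the drift across inner steps, and $4(\Whole-\schedulesize)\SGDrun^2\sgdbound^2/[\schedulesize(\Whole-1)]$ from the device sampling variance.

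The ideal gradient-step term is then controlled by $\strong$-strong convexity combined with $\smooth$-smoothness, and the stated condition $\smooth < 2\learningrate_t/(2\learningrate_t^2+1)$ is precisely what guarantees a contraction of the form $\E\|\weights_t - \learningrate_t\nabla\globall(\weights_t)-\weights^*\|^2 \leq (1 - c\learningrate_t)\,\E\|\weights_t-\weights^*\|^2$ with $c>0$ uniformly in $t$. Plugging this contraction and the four perturbation bounds into the one-step recursion and substituting the inductive hypothesis $\E\|\weights_t-\weights^*\|^2 \leq \constant/(t+\gamma)$, the closure of the induction reduces to verifying a standard algebraic inequality of the form $(1-c\learningrate_t)\constant/(t+\gamma) + \learningrate_t^2\constant \leq \constant/(t+1+\gamma)$, which the diminishing schedule $\learningrate_t = \beta/(t+\gamma)$ with $\beta > 1/\strong$ is designed to satisfy.

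The main obstacle I anticipate is the bookkeeping inside step (ii) of the perturbation decomposition. Unlike standard FedAvg where quantization appears only at the communication step, in our QNN setting the weight quantization enters the gradient at every one of the $\SGDrun$ inner SGD iterations. Matching the factor $(1 + 2\SGDrun\sgdbound^2/\schedulesize)$ that multiplies $d/2^{2\precision}$ in \eqref{v} requires tracking how the per-step quantization errors propagate through the $\SGDrun$ local updates and then average through subsampling over $\scheduleset$, rather than handling a single terminal quantization. The remaining ingredients, namely the sampling variance identity, the drift bound and the recursive algebraic inequality, are standard tools available from \cite{SZ:21} and \cite{Xi:20} which the proposition already credits.
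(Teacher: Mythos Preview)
Your proposal is correct and takes essentially the same approach the paper sketches: the paper's proof is omitted for space and simply directs the reader to combine Lemma~\ref{Lem1} with the quantized-update convergence analysis of \cite{SZ:21} and the FedAvg recursion of \cite{Xi:20} after swapping in the QNN-based SGD step \eqref{SGD_update}, which is precisely the descent-plus-induction scheme with the four-way perturbation decomposition you outline. Your identification of the inner-step quantization bookkeeping as the nontrivial delta over \cite{Xi:20} is apt and is exactly what the paper means by ``replacing the SGD weight update.''
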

\vspace{-8.0mm}
\begin{proof}
	The complete proof is omitted due to space limitations. Essentially, proposition \ref{Prop} can be proven by using Lemma \ref{Lem1}, the convergence result with a quantized model update \cite{SZ:21}, and replacing the \ac{SGD} weight update in \cite{Xi:20} with \eqref{SGD_update}. 
\end{proof}
From Proposition \ref{Prop}, We let \eqref{T_accuracy} be upper bounded by $\accuracy$ in \eqref{accuracy_constraint} as follows
\begin{align}
	\E[\globall(\weights_T) - \globall(\weights^*)] \leq \frac{\smooth}{2}\frac{\constant}{T+\gamma} \leq \accuracy. \label{accuracy_inequality}
\end{align}
We then take equality in \eqref{accuracy_inequality} to obtain $T = \smooth \constant/(2 \accuracy) - \gamma$ and approximate the problem as
\begin{subequations}
\begin{align}
		\min_{\precision} \quad & \frac{\schedulesize}{\Whole} \left( \frac{\smooth \constant}{2 \accuracy} \hspace{-0.5mm} -  \hspace{-0.5mm} \gamma \right) \sum_{k=1}^{\Whole} \left\{ \tE \hspace{-0.5mm}+\hspace{-0.5mm} \SGDrun \localE \right\}  \hspace{-0.5mm}=  \hspace{-0.5mm}f_E (\precision) \label{obj_approximation} \\
		\textrm{s.t.} \quad & \precision \in [1, \dots, \premax].
\end{align}
\end{subequations}
Note that any optimal solution $\precision^*$ from problem \eqref{obj_approximation} can satisfy problem \eqref{obj} \cite{BL:21}. For any feasible $T$ from \eqref{accuracy_inequality}, we can always choose $T_0 > T$ such that $T_0$ satisfies \eqref{accuracy_constraint}. 

Now, we relax $\precision$ as a continuous variable, which will be rounded back to an integer value. 
From \eqref{computation_energy}, \eqref{transmission_energy}, and \eqref{v}, we can observe that, as $\precision$ increases, $\tE$ and $\localE$ becomes larger while $T$ decreases. Hence, we can know that a local optimal $\precision^*$ may exist for minimizing $\obj$. Since $\obj$ is differentiable with respect to $\precision$ in the given range, we can find $\precision^*$  by solving $\partial{\obj}/ {\partial n} =0$ from Fermat's Theorem \cite{Bauhe:11}. Although it is difficult to derive $\precision^*$ analytically, we can obtain it numerically using a line search method. Hence, we can find a local optimal solution, which minimizes the total energy consumption under the given target accuracy.  
 
\section{Simulation Results} \label{sec:simlulation results}
For our simulations, we uniformly deploy $\Whole = 50$ devices over a square area of size $100$ m $\times$ $100$ m serviced by one \ac{BS} at the center, and we assume a Rayleigh fading channel with a path loss exponent of 2. We also use MNIST dataset. Unless stated otherwise, we use $P = 100$ mW, $B = 10$ MHz, $N_0 = -100$ dBm, $\schedulesize = 30$, $\SGDrun= 5$, $\premax =32$ bits, $\accuracy = 0.01$, $\beta = 5$, $\smooth = 1$, $\mu = 1$, $\gamma = 1$, $\sgdvariance_k = 1$, and $\sgdbound = \sqrt{4\smooth\epsilon}$, $\forall k = 1, \dots, N$ \cite{LM:18}. For the computing model, we set $\MACamplitude = 3.7$ pJ and $\MACex = 1.25$ as done in \cite{MB:18}, and we assume that each device has the same architecture of the processing chip. All statistical results are averaged over $10000$ independent runs

\begin{figure}
	\includegraphics[width=0.85\columnwidth]{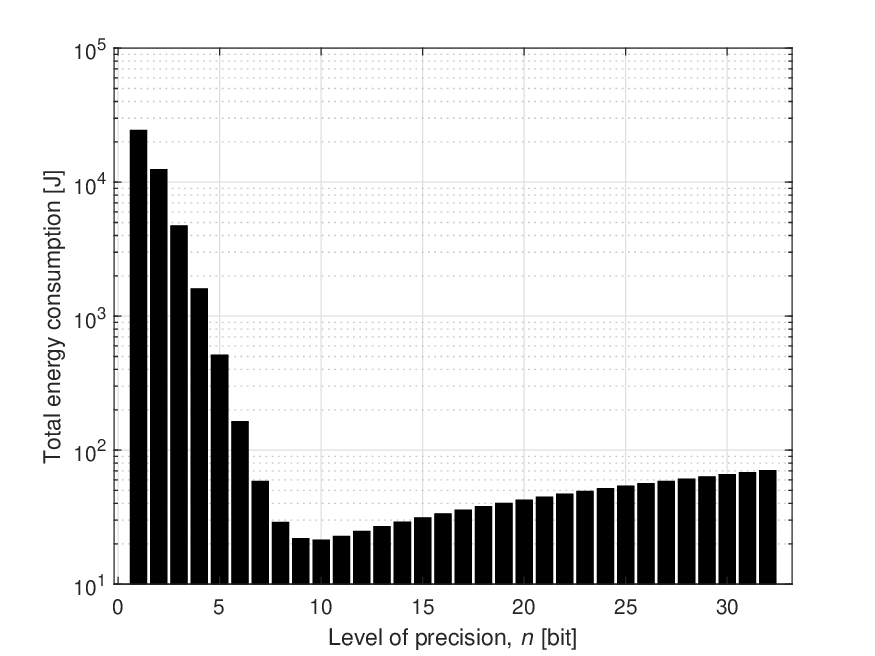}
	\captionsetup {singlelinecheck = false}

	\caption{Total energy consumption for varying level of precision.}
	\label{fig:energy_consumption}
	\vspace{-4.0mm}
\end{figure}
	
Figure \ref{fig:energy_consumption} shows the total energy consumption of the \ac{FL} system until convergence for varying levels of precision $\precision$. In Fig. \ref{fig:energy_consumption}, we assume a \ac{QNN} structure with two convolutional layers: 32 kernels of size $3\times3$ with one padding and three of strides and 32 kernels of size $3\times3$ with one padding and two of strides, each followed by $2\times2$ max pooling. Then, we have one dense layer of 220 neurons and one fully-connected layer. In this setting, we have $\numMAC = 20.64\times10^6, \numweights = 0.18\times10^6,$ and $\numout = 1354$.  From this figure, we can see that the total energy consumption decreases and then increases with $\precision$. This is because when $\precision$ is small, quantization error becomes large as shown in Lemma \ref{Lem1}, which slows down the convergence rate in \eqref{T_accuracy}. However, as $\precision$ increases, the energy consumption for the local training and transmission also increases. Hence, a very small or very high $n$ may induce undesired large quantization error or unnecessary energy consumption due to a high level of precision. From this figure, we can see that $\precision = 10$ can be optimal for minimizing energy consumption for our system.

\begin{figure}
	\includegraphics[width=0.85\columnwidth]{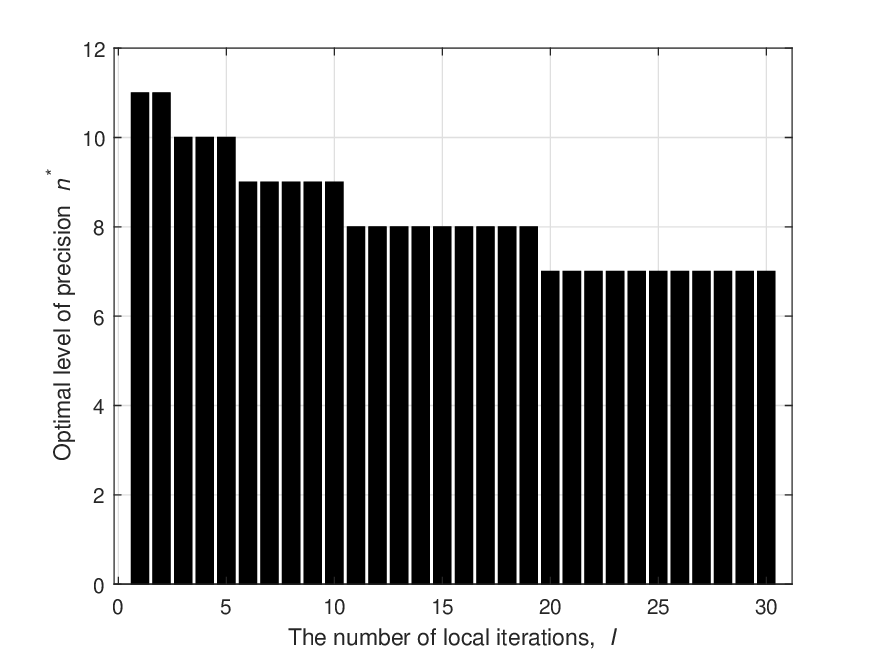}
	\captionsetup {singlelinecheck = false}

	\caption{Optimal level of precision for varying the number of local iterations.}
	\label{fig:Iteration_number}
	\vspace{-6.0mm}
\end{figure}

Figure \ref{fig:Iteration_number} shows the optimal level of precision $\precision^*$ when varying the number of local iterations $\SGDrun$. We use the same \ac{CNN} architecture in Fig. \ref{fig:energy_consumption}. We can observe that $\precision^*$ increases with $\SGDrun$. This is because, as $\SGDrun$ increases, the local models converge to the local optimal faster as \ac{SGD} averages out the effect of quantization error \cite{IH:16}. Hence, a lower $\precision$ can be chosen by leveraging the increased $\SGDrun$ to minimize the total energy consumption. We can observe that only $\precision = 7$ is required at $\SGDrun = 20$ while we need $\precision = 10$ at $\SGDrun = 3$. 
	
\begin{figure}
	\vspace{-0.0mm}
	\includegraphics[width=0.85\columnwidth]{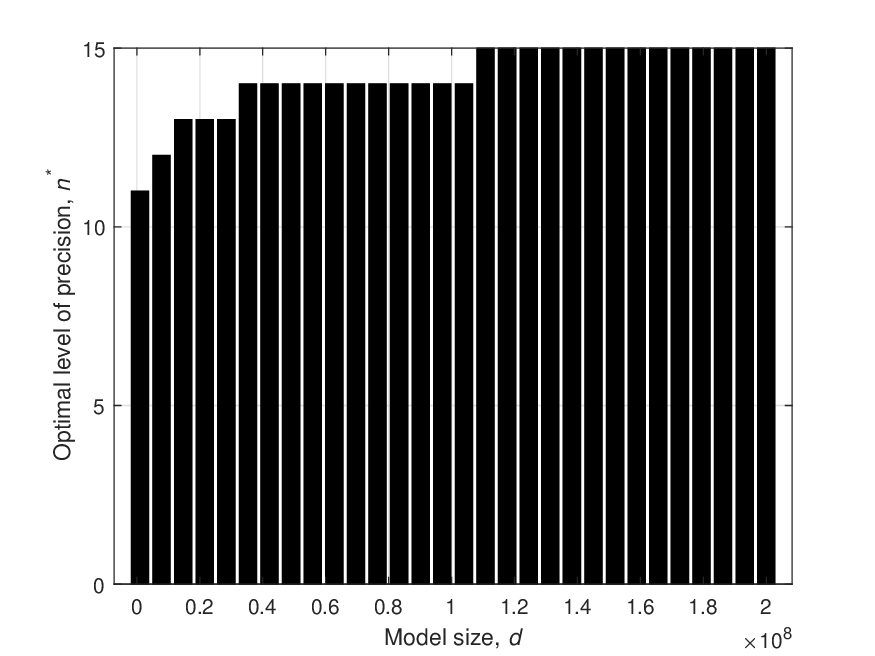}
	\captionsetup {singlelinecheck = false}

	\caption{Optimal level of precision for varying model size.}
	\label{fig:model_size}
	\vspace{-6.0mm}
\end{figure}

Figure \ref{fig:model_size} presents the optimal level of precision $\precision^*$ for varying model size $d$. Note that $d$ equals to the number of model parameters $\numweights$. To scale the number of \ac{MAC} operations for increasing $d$ accordingly, we set $\numMAC = 0.5 \times 10^3 \numweights$. From Fig. \ref{fig:model_size}, we can see that $\precision^*$ increases with $d$. From Lemma \ref{Lem1}, the quantization error accumulates as $d$ increases. This directly affects the convergence rate in \eqref{T_accuracy} resulting in both increased global iterations and the total energy consumption. Therefore, to mitigate the increasing quantization error from increasing $d$, a larger level of precision may be chosen.

\begin{figure}
		\vspace{-1.0mm}
	\includegraphics[width=0.85\columnwidth]{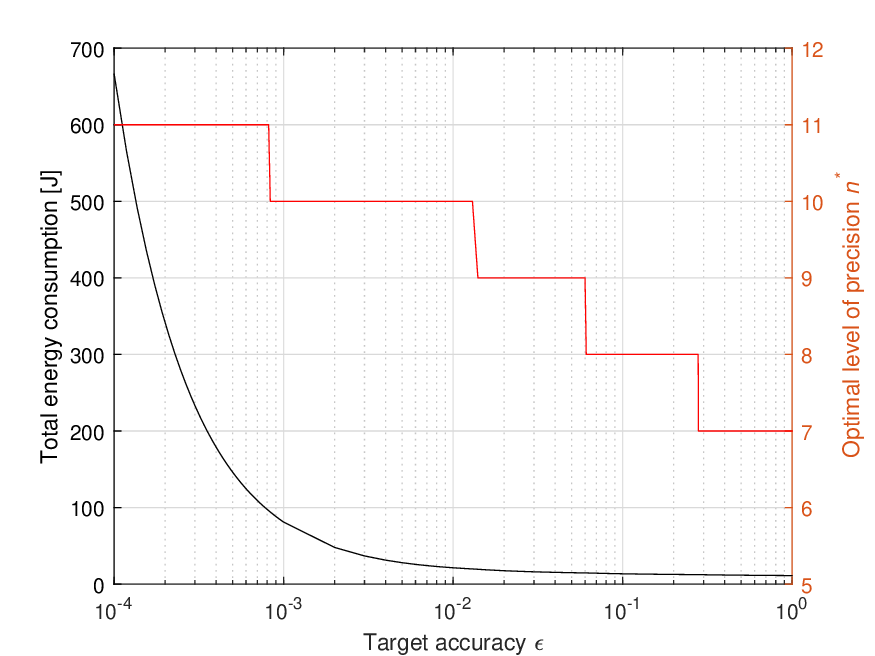}
	\captionsetup {singlelinecheck = false}

	\caption{Total energy consumption and optimal level of precision for varying target accuracy.}
	\label{fig:energy_accuracy_precision}
	\vspace{-5.5mm}
\end{figure}

Figure \ref{fig:energy_accuracy_precision} shows the total energy consumption and $\precision^*$ when varying a target accuracy $\accuracy$. For these results, we use the same \ac{CNN} architecture as Fig. \ref{fig:energy_consumption}. We can see that a higher accuracy level requires larger total energy consumption and more bits for data representation to mitigate the quantization error. In addition, the \ac{FL} system needs a more number of global iterations to achieve $\epsilon$ from \eqref{accuracy_inequality}. As $\epsilon$ becomes looser, a lower $\precision$ can be chosen. From Fig. \ref{fig:energy_accuracy_precision}, we can see that an additional $127$ J energy is required to increase  $\accuracy$ from 0.01 to $0.001$ while one additional bit of precision is needed.

\begin{figure}
			\vspace{-0.0mm}
	\includegraphics[width=0.85\columnwidth]{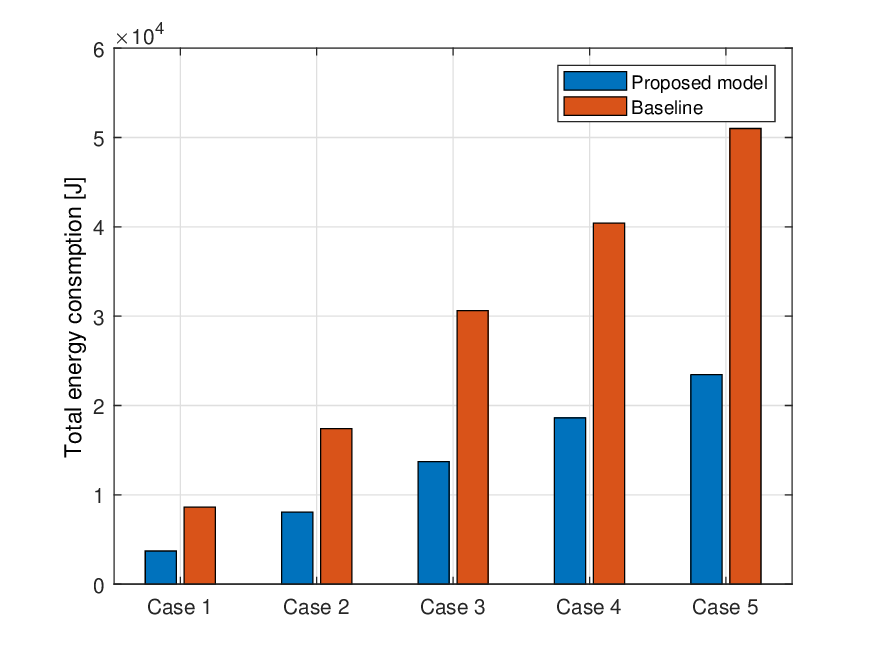}
	\captionsetup {singlelinecheck = false}
	
		\vspace{-3.0mm}
	
	\caption{Total energy consumption for various \ac{CNN} models.}
	\label{fig:various}
	\vspace{-6.0mm}
\end{figure}

Figure \ref{fig:various} compares the total energy consumption until convergence for varying the size of \ac{CNN} models with the baseline that uses standard 32 bits for data representation. Case 1 is assumed to be CNN of two layers and have $d = 25.6\times10^6$ and $N_c = 1.8\times10^9$. Case 2 and 3 are assumed to be CNN of five layers. They have  $61.6\times10^6$ and  $83.5\times10^6$ number of parameters and  $0.35\times10^9$ and $83.5\times10^6$ number of MAC operations, respectively. Case 4 is 7 layers of CNN with  $d = 115.1\times10^6$ and $N_c = 10.4\times10^9$. Lastly, we assume Case 5 is 9 layers of CNN with $d =138.6\times10^6$ and $15.5\times10^9$. The corresponding $\precision^*$ are 13, 14, 14 15, 15, respectively. We can see that our \ac{FL} scheme is more effective for \ac{CNN} models with a large model size. Note that Case 5 has 138.8 M weights while Case 1 has 25.6 M weights. In particular, for Case 5, we can reduce the total energy consumption up to 53$\%$ compared to the baseline.

\vspace{-2.0mm}
\section{Conclusion} \label{sec:conclusion}
\vspace{-0.0mm}
In this paper, we have studied the problem of energy-efficient quantized \ac{FL} over wireless networks. We have presented the energy model for the quantized \ac{FL} based on the physical structure of a processing chip and the convergence rate. Then, we have formulated an energy minimization problem that considers a level of precision in the quantized \ac{FL}. To solve this problem, we have used a line search method. Simulation results have shown that our model requires much less energy than a standard \ac{FL} model for convergence. The results particularly show significant improvements when the local models rely on large neural networks. In essence, this work provides the first holistic study of the tradeoff between energy, precision, and accuracy for FL over wireless networks.
\vspace{-6.0mm}

\bibliographystyle{IEEEtran}
\bibliography{Bibtex/StringDefinitions,Bibtex/IEEEabrv,Bibtex/mybib}

\begin{thebibliography}{10}
\providecommand{\url}[1]{#1}
\csname url@samestyle\endcsname
\providecommand{\newblock}{\relax}
\providecommand{\bibinfo}[2]{#2}
\providecommand{\BIBentrySTDinterwordspacing}{\spaceskip=0pt\relax}
\providecommand{\BIBentryALTinterwordstretchfactor}{4}
\providecommand{\BIBentryALTinterwordspacing}{\spaceskip=\fontdimen2\font plus
\BIBentryALTinterwordstretchfactor\fontdimen3\font minus
  \fontdimen4\font\relax}
\providecommand{\BIBforeignlanguage}[2]{{%
\expandafter\ifx\csname l@#1\endcsname\relax
\typeout{** WARNING: IEEEtran.bst: No hyphenation pattern has been}%
\typeout{** loaded for the language `#1'. Using the pattern for}%
\typeout{** the default language instead.}%
\else
\language=\csname l@#1\endcsname
\fi
#2}}
\providecommand{\BIBdecl}{\relax}
\BIBdecl

\bibitem{Ti:20}
T.~Li, A.~K. Sahu, A.~Talwalkar, and V.~Smith, ``Federated learning:
  Challenges, methods, and future directions,'' \emph{{IEEE} Signal Process.
  Mag.}, vol.~37, no.~3, pp. 50--60, May 2020.

\bibitem{HB:16}
H.~B. McMahan, E.~Moore, D.~Ramage, S.~Hampson, and B.~A. Arcas,
  ``Communication-efficient learning of deep networks from decentralized
  data,'' \emph{arXiv preprint arXiv:1602.05629}, 2017.

\bibitem{Ro:19}
R.~Schwartz, J.~Dodge, N.~A. Smith, and O.~Etzioni, ``Green {AI},'' \emph{arXiv
  preprint arXiv:1907.10597}, 2019.

\bibitem{MB:17}
B.~Moons, K.~Goetschalckx, N.~Van~Berckelaer, and M.~Verhelst, ``Minimum energy
  quantized neural networks,'' in \emph{Proc. of Asilomar Conf. on Signals,
  Systems, and Computers}, Pacific Grove, CA, USA, Apr. 2017.

\bibitem{Stesa:21}
S.~Savazzi, S.~Kianoush, V.~Rampa, and M.~Bennis, ``A framework for energy and
  carbon footprint analysis of distributed and federated edge learning,''
  \emph{arXiv preprint arXiv:2103.1034}, 2021.

\bibitem{Ngutra:19}
N.~H. Tran, W.~Bao, A.~Zomaya, M.~N.~H. Nguyen, and C.~S. Hong, ``Federated
  learning over wireless networks: Optimization model design and analysis,'' in
  \emph{Proc. of IEEE Conf. on Computer Commun.}, Paris, France, May 2019.

\bibitem{Zhaya:21}
Z.~Yang, M.~Chen, W.~Saad, C.~S. Hong, and M.~Shikh-Bahaei, ``Energy efficient
  federated learning over wireless communication networks,'' \emph{{IEEE}
  Trans. Wireless Commun.}, vol.~20, no.~3, pp. 1935--1949, Mar. 2021.

\bibitem{Khaung:21}
Q.~Zeng, Y.~Du, K.~Huang, and K.~K. Leung, ``Energy-efficient resource
  management for federated edge learning with cpu-gpu heterogeneous
  computing,'' \emph{{IEEE} Trans. Wireless Commun.}, 2021, to appear.

\bibitem{SZ:21}
S.~Zheng, C.~Shen, and X.~Chen, ``Design and analysis of uplink and downlink
  communications for federated learning,'' \emph{{IEEE} J. Sel. Areas Commun.},
  vol.~39, no.~7, Jul. 2021.

\bibitem{Yuyan:21}
Y.~Yang, Z.~Zhang, and Q.~Yang, ``Communication-efficient federated learning
  with binary neural networks,'' \emph{{IEEE} J. Sel. Areas Commun.}, 2021, to
  appear.

\bibitem{IH:16}
I.~Hubara, M.~Courbariaux, D.~Soudry, R.~El-Yaniv, and Y.~Bengio, ``Quantized
  neural networks: Training neural networks with low precision weights and
  activations.'' \emph{arXiv preprint arXiv:1609.07061}, 2016.

\bibitem{SG:15}
S.~Gupta, A.~Agrawal, K.~Gopalakrishnan, and P.~Narayanan, ``Deep learning with
  limited numerical precision,'' in \emph{Proc. of International Conference on
  Machine Learning (ICML)}, Lille, France, Jul. 2015.

\bibitem{MB:18}
B.~Moons, D.~Bankman, and M.~Verhelst, \emph{Embedded Deep Learning,
  Algorithms, Architectures and Circuits for Always-on Neural Network
  Processing}.\hskip 1em plus 0.5em minus 0.4em\relax Springer, 2018.

\bibitem{Xi:20}
X.~Li, K.~Huang, W.~Yang, S.~Wang, and Z.~Zhang, ``On the convergence of fedavg
  on non-iid data,'' in \emph{Proc. of International Conference on Learning
  Representations (ICLR)}, May 2020.

\bibitem{BL:21}
B.~Luo, X.~Li, S.~Wang, J.~Huangy, and L.~Tassiulas, ``Cost-effective federated
  learning design,'' in \emph{Proc. of IEEE Conf. on Computer Commun.},
  Vancouver, BC, Canada, May 2021.

\bibitem{Bauhe:11}
H.~H. Bauschke, P.~L. Combettes \emph{et~al.}, \emph{Convex analysis and
  monotone operator theory in Hilbert spaces}.\hskip 1em plus 0.5em minus
  0.4em\relax Springer, 2011, vol. 408.

\bibitem{LM:18}
L.~M. Nguyen, P.~H. Nguyen, M.~van Dijk, P.~Richtarik, K.~Scheinberg, and
  M.~Takac, ``Sgd and hogwild! convergence without the bounded gradients
  assumption,'' in \emph{Proc. of International Conference on Machine Learning
  (ICML)}, Stockholm, Sweden, Jul. 2018.

\end{thebibliography}

\end{document}